\newtheorem{theorem}{Theorem}
\newtheorem{definition}{Definition}[section]
\newtheorem{lemma}[theorem]{Lemma}
\begin{document}
%
\title{Cooperative Reinforcement Learning on Traffic Signal Control}
%
%
%

\author{Chi-Chun Chao,~\IEEEmembership{Student Member,~IEEE,}
        , Jun-Wei Hsieh,~\IEEEmembership{Member,~IEEE,}
        and Bor-Shiun Wang,~\IEEEmembership{Student Member,~IEEE,}
 }       

%
%

\markboth{Journal of \LaTeX\ Class Files,~Vol.~XX, No.~XX, May~2021}%
{Shell \MakeLowercase{\textit{et al.}}: Bare Demo of IEEEtran.cls for IEEE Journals}
%



\maketitle

\begin{abstract}
Traffic signal control is a challenging real-world problem aiming to minimize overall travel time by coordinating vehicle movements at road intersections. Existing traffic signal control systems in use still rely heavily on oversimplified information and rule-based methods. Specifically, the periodicity of green/red light alternations can be considered as a prior for better planning of each agent in policy optimization. To better learn such adaptive and predictive priors, traditional 
 RL-based methods can only return a fixed length from predefined action pool with only local agents. If there is no cooperation between these agents, some agents often make conflicts to other agents and thus decrease the whole throughput.  This paper proposes a cooperative, multi-objective architecture with age-decaying weights to better estimate multiple reward terms for traffic signal control optimization, which termed COoperative Multi-Objective Multi-Agent Deep Deterministic Policy Gradient (COMMA-DDPG). Two types of agents running to maximize rewards of different goals - one for local traffic optimization at each intersection and the other for global traffic waiting time optimization.  The global agent is used to guide the local agents as a means for aiding faster learning but not used in the inference phase. We also provide an analysis of solution existence together with convergence proof for the proposed RL optimization. Evaluation is performed using real-world traffic data collected using traffic cameras from an Asian country. Our method can effectively reduce the total delayed time by 60\%. Results demonstrate its superiority when compared to SoTA methods.
\end{abstract}

\begin{IEEEkeywords}
Reinforcement learning, Traffic signal control
\end{IEEEkeywords}


%
\IEEEpeerreviewmaketitle

\section{Introduction}
\label{sec:intro}
%
%
%
%

\IEEEPARstart{T}{raffic} 
signal control is a challenging real-world problem whose goal tries to minimize the overall vehicle travel time by coordinating the traffic movements at road intersections. Existing traffic signal control systems in use still rely heavily on manually designed rules which cannot adapt to dynamic traffic changes. Recent advance in reinforcement learning (RL), especially deep RL~\cite{alemzadeh2020adaptive,zheng2019diagnosing}, offers excellent capability to work with high dimensional data, where agents can learn a state abstraction and policy approximation directly from input states.  This paper explores the possibility of RL to on-policy traffic signal control with fewer assumptions.

In literature, there have been different RL-based frameworks~\cite{wei2021recent}  proposed for traffic signal control.  Most of them~\cite{zheng2019diagnosing,mannion2016experimental,pham2013learning,van2016coordinated,wei2018intellilight,arel2010reinforcement,calvo2018heterogeneous} are value-based and can achieve faster convergences on traffic signal control.  However, the actions, states, and time space they can handle are discrete. Thus, the time slots for each action to be executed are fixed and cannot reflect the real requirements to optimize traffic conditions.  Moreover, a small change in the value function will cause great effects on the policy decision.  To make decisions on continuous space, recent policy-based RL methods~\cite{chu2019multi,nishi2018traffic,mousavi2017traffic} become more popularly adopted in traffic signal control so that a non-discrete length of phase duration can be inferred.   However, its gradient estimation is strongly dependent on sampling and not stable if sampling cases are not general, and thus easily trapped to a non-optimal solution. 

Another vital problem of the above RL-based methods is that their agents are trained in an off-policy way.  They are not retrained on-fly during inference.   Its means their policy decision strategy cannot be amended and adapted to real traffic conditions.  Moreover, they make action decisions continuously along the time and  give drivers very short  reaction time to change their behaviors.  Outcomes from these methods are less practical since in real-world traffic control scenarios, considering the next traffic light phase from pre-defined discrete cyclic sequences of red/green lights is actually important to let drivers know how much reminding time the next traffic signal phase will be changed.  For traffic optimization, this means when the agent makes decisions not only on which action to be performed but also how long it should be performed.  To determine a proper period for action to be executed, some frameworks~\cite{aslani2017adaptive,aslani2018traffic} pre-define some time slots for the agent to choose for computation efficiency and traffic control simplification.  However, this  solution of pre-defining time slots is less flexible than an on-demand solution to better relieve traffic congestion.

To bridge the gaps between value-based and policy-based RL approaches, the actor-critic framework is widely adopted to stabilize the RL training process, where the policy structure is known as the actor and the estimated value function is known as the critic.  Thus, there are some actor-critic frameworks proposed for traffic signal control.  For example, in ~\cite{pang2019deep,wu2020control}, a model-free actor-critic method named as ``Deep Deterministic Policy Gradient'' (DDPG)~\cite{lillicrap2015continuous} was adopted to learn a deterministic policy mapping states to actions.  However, it is a ``single-agent'' solution and cannot output a proper execution period for the chosen action to more effectively relieve traffic congestion.  This paper incorporates multiple agents in an actor-critic framework to develop a COoperative  Multi-objective Multi-Agent DDPG (COMMA-DDPG) for optimal traffic signal control. This novelty of our method is to introduce a global agent to trade off different local agents' requirements and cooperate them to find better strategies for traffic signal control.  The global agent is used as a means for aiding faster learning and not used in the inference phase.  This idea is very different from other RL-based multi-agent methods which use only local agents to search solutions and often produce conflicts between two agents' strategies, thus decreasing the whole throughput.

Fig. \ref{fig:comma} shows the diagram of this COMMA-DDPG architecture.  A local agent is first used to learn and optimize the policy at an intersection. During the training process, we introduce a global agent to cooperate each local agent and then optimize  the global traffic throughput among all intersections in the whole observed site.
 With the actor-critic framework, the global agent can optimize and send various information exchange among local intersections to local agents so as to optimize the final reward globally.   It can aid faster learning while not constraining the deployment since it is not used during the inference phase. To finish this goal, the parameters of each local DDPG-based agent is initialized by the global agent.  Thus, the COMMA-DDPG framework can select the best policy for controlling periodical phases of traffic signals that maximizes throughput by trading off the requirements and reducing conflicts between agents.  It can yield a dynamic length of the next traffic light phase in seconds. This is very different from other RL-based methods which can only return a fixed length from a predefined action pool. Then, the remaining seconds of a traffic light phase can be dynamically predicted and sent to the drivers for doing next driving plans.  It is noticed again that the global agent is only used during the training process to cooperate different local agents' requirements.  

{\bf Convergence Analysis.}
To prove the convergence of COMMA-DDPG, we also analyze the existence and uniqueness of our actor-critic model in the ``Appedix'' Section for proof. This proof provides theoretical supports of the convergence of our COMMA-DDPG approach. 

{\bf Evaluation.} 
 Our traffic data consists of visual traffic monitoring sequences from five consecutive intersections during morning rush hour in one local country of Taiwan. We conducted various ablation studies on COMMA-DDPG with different SoTA to evaluate the performance comparisons. Results show that COMMA-DDPG provides significant improvement of the overall traffic waiting time in better alleviating traffic congestion.

The remaining of this paper is organized in the following. Section 2 surveys related works of intelligent traffic control. Section 3 describes the RL notations and schemes. Section 4 describes the architecture of COMMA-DDPG. Section 5 shows the experiment results.
\begin{figure}
    \centering
    \includegraphics[scale=0.3]{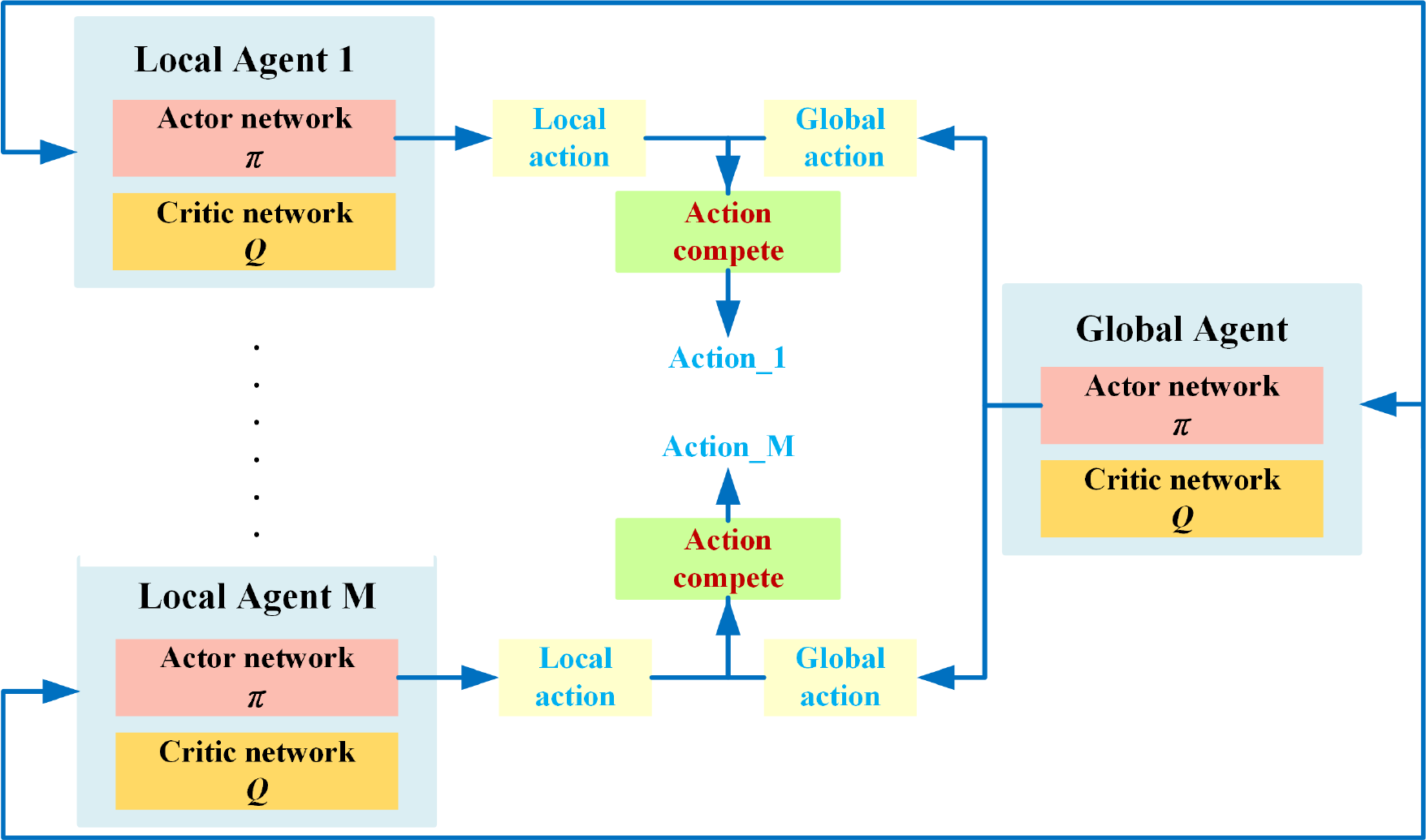}
    \caption{COMMA-DDPG architecture.}
    \label{fig:comma}
\end{figure}

\section{Related Work}
\label{sec:related}

\subsection{Traditional traffic control}
\label{related:Traditionaltrafficcontrol}

Traditional traffic control methods can be categorized into the following classes: (1) fixed time control~\cite{roess2004traffic}, (2) actuated control~\cite{fellendorf1994vissim,mirchandani2001real}, and (3) adaptive control~\cite{zheng2019learning,zheng2019diagnosing}. They are mostly based on  human knowledge or manual efforts to design an appropriate cycle length and strategies for better traffic control. The involved manual tasks will make parameter settings very cumbersome and difficult to satisfy different scenarios' requirements including peak hours, normal hours, and off-peak hours. Fixed time control is simply, easy, and thus becomes the most commonly adopted method in traffic signal control. Its time length for each traffic light phase is pre-calculated and still keep the same even though traffic conditions have been changed. Actuated control determines traffic conditions using pre-determined thresholds; that is, if the traffic condition (e.g., the car queue length) exceeds a threshold, a green light will be issued accordingly. Adaptive control methods including SCATS~\cite{lowrie1990scats} and SCOOT~\cite{hunt1981scoot}  determine the best signal phase according to the on-going traffic conditions, and thus can achieve more effective traffic optimization.

\subsection{RL traffic control}
\label{related:RLtrafficcontrol}
The recent advancement of RL shed a light on automatic traffic control improvement. RL agents use knowledge from the traffic data to learn and optimize the policies without human intervention for traffic sign control. There are two main approaches to solving traffic sign control problems, $i.e.$, value-based and policy-based. There is also a hybrid, actor-critic approach, which employs both value-based and policy-based searches. The value-based method first estimates the value (expected return) of being in a given state and then finds the best policy from the estimated value function.  One of the most used value-based methods is $Q$ learning~\cite{watkins1992q}.  The first $Q$-learning method applied to control traffic signals at street intersection is traced from~\cite{abdoos2011traffic}. However, in $Q$ learning, a huge table should be created and updated to store the $Q$ values of each action in each state. Thus, it is both memory- and time-consuming, and improper for problems with complicated states and actions.  Recently, the advent of deep learning has cast significant impacts on many areas such as object detection, speech recognition, language translation, and so on. Thus, deep reinforcement learning methods such as value-based methods DQN (Deep Q Network)~\cite{zheng2019diagnosing,wei2019presslight,wei2019colight} and AC (Actor Critic) methods~\cite{aslani2017adaptive,xiong2019learning} are widely used in traffic control.  However, to the best of our knowledge, none of the above methods can reliably predict the remaining seconds of a traffic light phase. We note that such predicting capability can provide useful information for drivers to prepare their next driving behaviors in real-life traffic control regarding passenger safety. The proposed MOMA-DDPG model in this paper is a new action design that can solve and predict this phase remaining time.

On the other hand, RL can be classified according to the adopted action schemes, such as: (1) setting the length of green light, (2) choosing whether to change phase, and (3) choosing the next phase.
The DQN and AC methods are suitable for action schema 2 and 3, but improper for setting the length of the green line since the action space of DQN is discrete and huge for this task, and much calculation efficacy will be wasted for the AC method. Compared with DQN whose action space is discrete, DDPG~\cite{lillicrap2015continuous} can solve continuous action spaces which are more suitable to model the length of green light.   Although DDPG is originated from the AC method, it performs more robustly than the AC method since it creates two networks as regression models to estimate values.  Therefore, this paper uses DDPG~\cite{lillicrap2015continuous} as the main architecture to adapt and model the action space of green light length. Given a range of green light duration, the proposed DDPG-based method can easily output seconds within the range. In the past, the DDPG-based traffic control frameworks ~\cite{pang2019deep,wu2020control} focus on only a single intersection. This paper will use the idea of DDPG to model traffic conditions on multiple intersections by introducing a global agent to trade off different local agents’requirements  and then find  better  strategies  for traffic  signal  control.  

\section{RL Background and Notations}
\label{sec:background}

The basic elements of a RL problem for traffic signal control can be formulated as the Markov Decision Process (MDP) mathematical framework of $< S,A,T,R,\gamma >$, with the following definitions:
\begin{itemize}[leftmargin=16pt]
\setlength\itemsep{-0.028cm}
\item $S$ denotes the set of states, which is the set of all lanes containing all possible vehicles. $s_t  \in S$ is a state at time step $t$ for an agent.
\item $A$ denotes the set of possible actions, which is the duration of green light. In our scenarios, both duration lengths for a traffic cycle and a yellow light are fixed.  Then, once the state of green light is chosen, the duration of a red light can be determined.  At time step $t$, the agent can take an action $a_t$ from  $A$.
\item $T$ denotes the transition function, which stores the probability of an agent transiting from state $s_t$ (at time $t$) to $s_{t+1}$ (at time $t+1$) if the action $a_t$ is taken; that is, $T(s_{t+1}|s_{t},a_{t} ):S \times A \rightarrow S.$
\item 	$R$ denotes the reward, where at time step $t$, the agent obtains a reward $r_t$ specified by a reward function $R(s_{t},a_{t} )$ if the action $a_t$ is taken under state $s_t$. 
\item $\gamma$ denotes the discount, which not only controls the importance of the immediate reward versus future rewards, but also ensures the convergence of the value function, where, $\gamma \in [0,1)$.
\end{itemize}

At time-step $t$, the agent determines its next action $a_t$ based on the current state $s_t$.  After executing $a_t$, it will be transited to next state $s_{t+1}$ and receive a reward $r_{t} (s,a)$; that is, $r_{t} (s,a)= \mathbb{E}[R_{t}|s_{t}=s,a_{t}=a]$,  where $R_t$ is named as the one-step reward. The way that the RL agent chooses an action is named policy and denoted by $\pi$. Policy is a function $\pi (s)$ that chooses an action from the current state $s$; that is, $\pi (s):S \rightarrow A$. The goal of this paper is to find such a policy to maximize the future reward $G_t$:
\begin{equation}
\label{equ:Gt}
    G_{t}=\Sigma_{k=0}^{\infty}\gamma^{k}R_{t+k}.
\end{equation}

A value function $V(s_t)$ indicates how good the agent is at state $s_t$, and is defined as
the expected total return of the agent starting from $s_t$. If $V(s_t)$ is conditioned on a given strategy $\pi$, it will be expressed by $V^{\pi}(s_{t})$; that is, $V^{\pi}(s_{t})=\mathbb{E}[G_{t}|s_{t}=s], \forall s_{t} \in S$.  The optimal policy $\pi^{*}$ at state $s_t$ can be found by solving  
\begin{equation}
    \pi^{*}(s_t)=arg \max\limits_{\pi}V^{\pi}(s_t), 
\label{eq2}
\end{equation}  
where $V^{\pi}(s_t)$ is the state-value function for a policy $\pi$.  Similarly, we can define the expected return of taking action $a$ in state $s_t$ under a policy $\pi$ denoted by a $Q$ function: 
\begin{equation}
    Q^{\pi}(s_{t},a_{t})=\mathbb{E}[G_{t}|s_{t}=s,a_{t}=a].
\end{equation}
The relationship between $Q^{\pi}(s_t, a_t)$ and $V^{\pi} (s_t)$ is derived as
\begin{equation}
   {{\rm{V}}^\pi }{\rm{(}}s) = \sum\limits_{a \in A} {\pi (a|s){Q^\pi }(s,a)}. 
\end{equation}
Then, the optimal solution $Q^{*}(s_{t},a)$ is found by iteratively solving:
\begin{equation}
Q^{*}(s_{t},a)=\max\limits_{\pi}Q^{\pi}(s_{t},a).
\end{equation} 
With the $Q$ function, the optimal policy $\pi^{*}$ at state $s_t$ can be found by solving:  
\begin{equation}
    \pi^{*}(s_t)=arg \max\limits_{a}Q^{*}(s_t,a). 
\end{equation}  
$Q^{*}(s_{t},a)$ is the sum of two terms: (i) the instant reward after a period of execution in the state $s_t$ and (ii) the discount expected future reward after the transition to the next state $s_{t+1}$. Then, we can use the Bellman equation \cite{bellmanequation} to express $Q^{*}(s_t,a)$ as follows: 
\begin{equation}
    Q^{*}(s_{t},a)=R(s_{t},a)+\gamma\mathbb{E}_{s_{t+1}}[V^{*}(s_{t+1})]. 
\label{eq:BellmanQ}
\end{equation}  
$V^*(s_t)$ is the maximum expected total reward from state $s_t$ to the end.  It will be the maximum value of $Q^*(s,a)$ among all possible actions. Then, $V^*$ can be obtained from $Q^*$ as follows:
\begin{equation}
{V^*}\left( {{s_t}} \right) = \mathop {\max }\limits_a {Q^*}\left( {{s_t},a} \right),\forall {s_t} \in S.
\end{equation} 
Two strategies, $i.e$., value iteration and policy iteration can be used to calculate the optimal value function $V^*(s_t)$.  The value iteration calculates the optimal state value function by iteratively improving the estimation of $V(s)$. It repeatedly updates the values of $Q(s,a)$ and $V(s)$ until they converge.  Instead of repeatedly improving the estimation of $V(s)$, the policy iteration redefines the policy at each step and calculate the value according to this new policy until the policy converges.

{\bf Deep Q-Network (DQN):} In \cite{hester2018deep,van2016deep}, a deep neural network is used to approximate the $Q$ function, which enables the RL algorithm to learn $Q$ well in high-dimensional spaces. Let $Q_{tar}$ be the targeted true value which is expressed as $Q_{tar}= r+\gamma \max\limits_{a'}Q(s',a';\theta)$.  In addition, let $Q(s,a;\theta)$ be the estimated value, where $\theta$ is the set of parameters of the used deep neural network.   We define the loss function for training the DQN as:
\begin{equation}
L(\theta)=\mathbb{E}_{s,a,r,s'}[(Q_{tar}-Q(s,a;\theta))^{2}].
\end{equation}
As described in \cite{mnih2015human}, the value of $Q_{tar}$ is constantly changing and often overestimated during training and results in the problem of unstable convergence of the $Q$ function.  
In ~\cite{van2016deep}, a DDQN (Double DQNs) was proposed to deal with this unstable problem by separating the neural network into two DQNs with two value functions such that there are two sets of weights $\theta$ and $\phi$ for parameterizing the original value function and the second target network, respectively.  The second DQN $Q_{tar}$ with parameters $\phi$ is a lagged copy of the first DQN $Q(s,q;\theta)$ to fairly evaluate the $Q$ value; that is,
\begin{equation}
Q_{tar} = r+\gamma Q(s',\max\limits_{a'}Q(s',a';\theta);\phi).
\end{equation}

{\bf Deep Deterministic Policy Gradient (DDPG)}:
DDPG is also a type of model-free and off-policy, and it also uses a deep neural network for function approximation. But unlike DQN which can only solve discrete and low-dimensional action spaces, DDPG can solve continuous action spaces. In addition, DQN is a value-based method, while DDPG is an Actor-Critic method, which has both a value function network (critic) and a policy network (actor). The critic network used in DDPG is the same as the actor-critic network(described before). The difference between DDPG and the actor-critic network is that:  derived from DDQN~\cite{van2016deep}, DDPG makes the training process more robustly by creating two DQNs (target and now) to estimate the value functions.

\section{Method}
\label{sec:method}
This paper proposes a cooperative, multi-objective architecture with age-decaying weights for traffic signal control optimization.  This architecture represents each intersection with a DDPG architecture which contains a critic network and an actor network.  In a given state, the critic network is responsible for judging the value of doing an action, and the actor network tries to make the best decision to output an action. The outcome of an action is the number of seconds of green light.  This paper assumes that the duration for a phase cycle (green, yellow, red) is different at different intersections but fixed at an intersection.  In addition, the duration of yellow light is the same and fixed for all intersections. Then, the duration of red light can be directly derived once the duration of green is known.     

The RL-based method for traffic signal control can be value-based or policy-based. The value-based method can achieve faster convergence on traffic signal control but its time space is discrete, and cannot reflect the real requirements to optimize traffic conditions. The policy-based method can infer a non-discrete length of phase duration but its gradient estimation is strongly dependent on sampling an not stable due to sampling bias.  Thus, the DDPG method is adopted in this paper to concurrently learns the desired $Q$-function and  the corresponding policy.

 \begin{figure}[t]
     \centering
     (a)\includegraphics[scale=0.28]{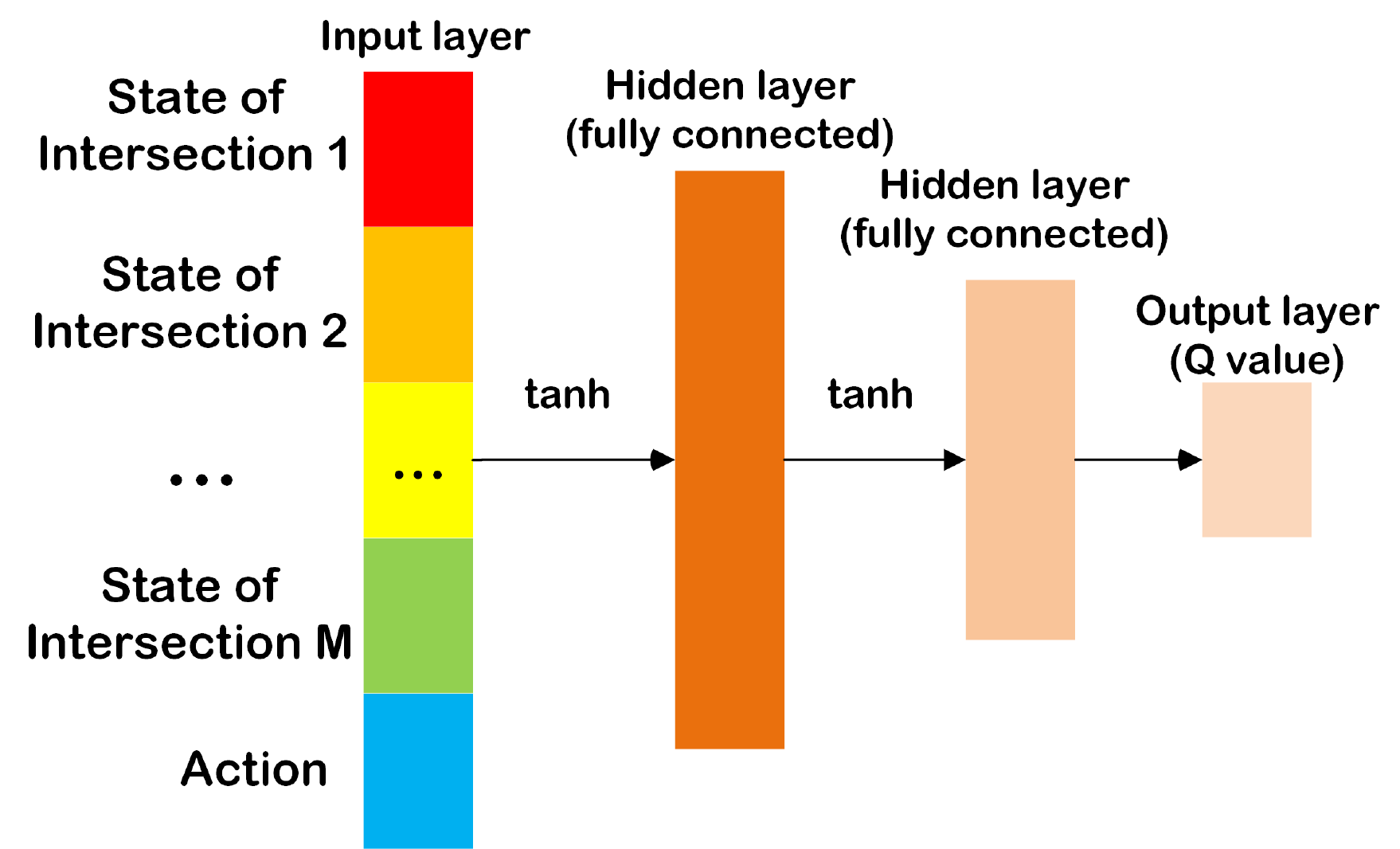}
     \centering
     (b)\includegraphics[scale=0.28]{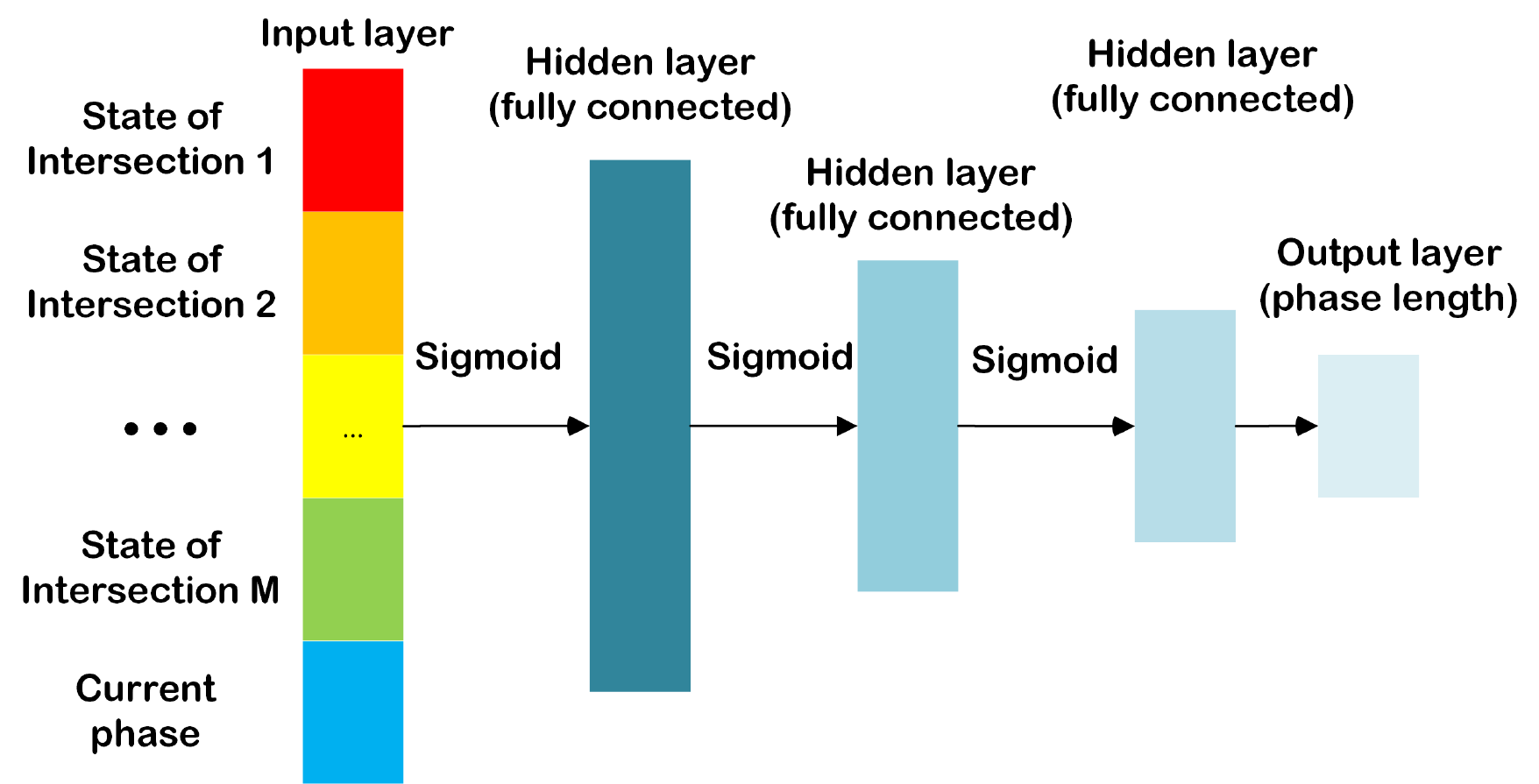}
     \caption{Architectures for local agent. (a) Local critic.(b) Local actor.}
     \label{fig:local_agent}
 \end{figure}

 \begin{figure}[t]
     \centering
     (a)\includegraphics[scale=0.28]{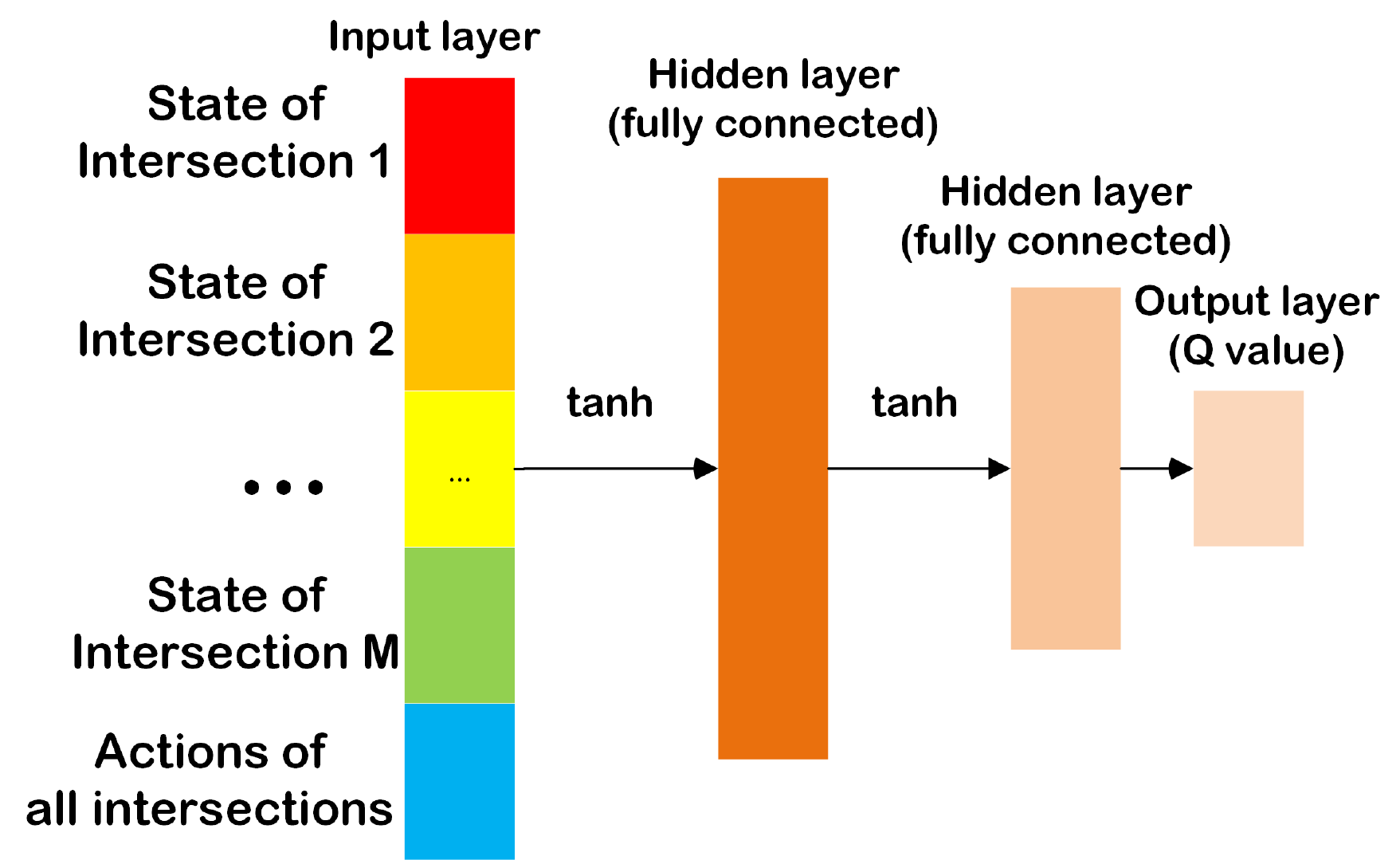}
     \centering
     (b)\includegraphics[scale=0.28]{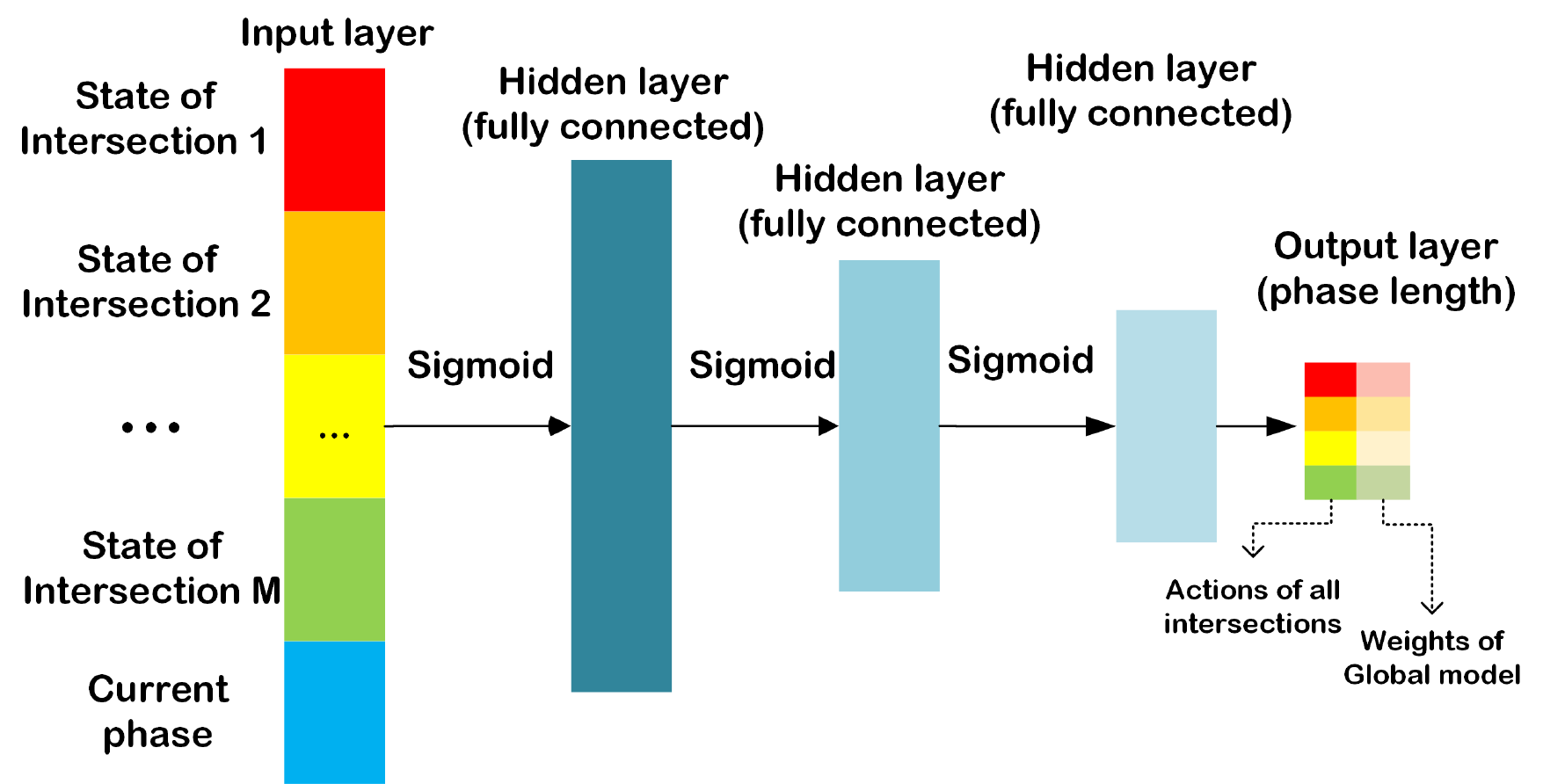}
     \caption{Architectures for global agent. (a) Global critic.(b) Global actor.}
     \label{fig:global_agent}
 \end{figure}

The original DDPG uses off-policy data and the Bellman equation \cite{bellmanequation} (see Eq.(\ref{eq:BellmanQ})) to learn the $Q$-function, and then to derive the policy.  It interleaves learning an approximator to find the best $Q^*(s,a)$ and also learning another approximator to decide the optimal action $a^*(s)$, and so in a way the action space is continuous.  The output of this DDPG is a continuous probability function to represent an action.  In this paper, an action corresponds to the seconds of green light. Although DDPG is off-policy, we  can mix the past data into the training set and thus make the distribution of the training set diverse by feeding current environment parameters to a traffic simulation software named TSIS~\cite{owen2000traffic} to provide on-policy data for RL training. 
 In a general DDPG, to make the agents have more opportunities to explore the environment, noise (random sampling) is added to the output action space during the training process, but will also makes the agent blindly explore the environments.  In the scenarios of traffic signal control,  most of SoTA methods adopted local agents to model different intersections.   During training, the same training mechanism ``adding noise to action model'' is used to make each agent explore the environment more.  However,``increasing the whole throughput'' is the same goal for all local agents.  The learning strategy ``adding noise to action model'' will decrease not only the effectiveness of learning but also the whole throughput since blinding exploration will make local agents choose conflict actions to other agents.  It means there should be a cooperation mechanism to be included to the DDPG mechanism among different local agents  to increase the final throughput during the learning process. The major novelty of this paper is to introduce a cooperative learning mechanism via a global agent to avoid local agents blindly exploring the environments so that the whole throughput and the learning effectiveness can be significantly improved.      


\subsection{Cooperative DPGG Network Architecture}
Most of policy-based RL methods~\cite{chu2019multi,nishi2018traffic,mousavi2017traffic} use only local agents to perform RL learning for traffic control.  The requirements of a local agent will easily produce conflicts to other agents and results in the divergence problem during optimization.  A cooperative DPGG architecture is designed in this paper, where a local agent controls each intersection and a global agent manages all intersections.  Details of this COMMA DPGG algorithm are described in {\bf Algorithm 1}.
 Although the DDPG method is off-policy, we use TSIS (Owen et al. 2000) to collect on-policy data for RL training. Details of the on-policy data collection process are described in the GOD (Generating On-policy Data) algorithm (see {\bf Algorithm 2}). With the set of on-policy data, the parameters of local and global agents are then updated by the {\bf LAU (Local Agent Updating)} algorithm and {\bf GAU(Global Agent Updating)} algorithm, respectively.  The global agent is involved only during the training stage to generate on-policy data.  Let $W_{G}^m$ represent the global agent's importance to the  $m$th intersection.  Then, the importance $W_{L}^m$ of the $m$th local agent will be 1-$W_{G}^m$, $i.e$.,  $W_{L}^m$ = 1- $W_{G}^m$.   
 For the $m$th intersection, the GOD algorithm  predicts the next actions by using the local agent and global agent via an epsilon greedy exploration scheme, respectively. The competition between the output seconds of the global agent and the local agent depends on $W_{G}^m$ and $W_{L}^m$.Then, the one with higher importance will be chosen to output seconds.  The output seconds are fed into TSIS (Owen et al. 2000) to generate on-policy data for RL training. To avoid the training being too biased to one side, we will have a penalty mechanism by the time-decayed method. Assuming that the model selects the global output for $t$ consecutive times, the global weight should be multiplied by $(0.95)^t$; that is, $W_{global}^m$=$W_{global}^m \times (0.95)^t$. This COMMA-DDPG  method  runs one hour of simulation with an epsilon greedy exploration scheme to collect on-policy data.  The set of on-policy data collected for training the $m$th local agent is denoted by $B^m$. The on-policy data set $\bf B$ for training the global agent is the union of all $B^m$, $i.e.$, $\bf B$= $(B^1,...,B^m , ..., B^M)$. In the following, details of each local agent and the global agent are described.
 
 \subsection{Generating On-policy Data}
As mentioned earlier, the major contribution of this paper is to add a global agent to trade off different local agents’requirements  and  cooperate  them  to  find  better  strategies  for traffic  signal  control.  Here we explain how global output and local output compete. The global output contains the number of seconds and weight $W_{G}^m$ which represent the global agent's importance to the  $m$th intersection. Then, the value $W_L^m$=(1-$W_{G}^m$) is the importance of the $m$th local agent to the global agent. The one with higher importance is chosen to output seconds.  To avoid the training being too biased to one side, we will have a penalty mechanism by the time-decayed method. Assuming that the model selects the global output for $t$ consecutive times, the global weight should be multiplied by $(0.95)^t$; that is, $W_{G}^m$=$W_{G}^m \times (0.95)^t$.  

Our method is based on MA-DDPG~\cite{gupta2017cooperative} with $M$ local agents, using the decentralized reinforcement learning method~\cite{matignon2007hysteretic}. The proposed COMMA-DDPG method adds a global agent, based on MA-DDPG, to control all intersections by using average stopped delay time of vehicle as the reward. Its actor output is not only the duration of the green light of each intersection, but also the weight $W_{G}^m$ relative to each local agent $m$. It is involved only during the training stage to generate on-policy data.   During the data generation process, a specific local agent is created at each intersection, using the clearance degree as the reward. As shown in Fig. \ref{fig:local_agent}(a), its critic’s input state will not only have information about its own intersection, but also takes the actions of other agents as its own state, so as to achieve information transmission between all the agents during the training process.

During the RL-based training process, before starting each epoch, we will perform a one-hour simulation to collect data (see {\bf Algorithm 2}) and store it in the replay buffer $\bf B$. In the process of interacting with the environment, we will add epsilon greedy and weight-decayed method to the selection of actions. In particular, the epsilon greedy method will gradually reduce epsilon from 0.9 to 0.1.

 \subsection{Local Agent}
In our scenario, a fixed duration of a traffic signal change cycle is assigned to each intersection.  In addition, there are 5 seconds prepared for the yellow light. Then, we only need to model the phase duration for the green light.  After that, the phase duration for the red light can be directly estimated.  At each intersection,  a DPGG-based architecture is constructed to model the local agent for traffic control.  To describe this local agent, some definitions are given as the following.

\begin{enumerate}
\item The duration of traffic phase ranges from $D_{min}$ to $D_{max}$ seconds.
\item Stopped vehicles are defined as those vehicles whose speeds are less than 3 $km/hr$.
\item The state at an intersection is defined by a vector in which each entry records the number of stopped vehicles of each lane at this intersection at the end of the green light, and current traffic signal phase.
\end {enumerate}
 The reward evaluating the quality of a state at an intersection is defined as the clearance degree of this state at this intersection, $i.e.$, the number of vehicles remaining in the intersection when the period of green light ends. There are two cases to give a reward to qualify a state; that is, (1) the green light ends but there is still traffic and (2) the green light is still but there is no traffic.   There is no reward or penalty for other cases.  Let $N_{m,t}$ denote the number of vehicles in the intersection $m$ at time $t$, and $N_{max}$ be the maximum traffic flow in the $m$th intersection.   This paper uses the clearance degree as a reward for qualifying the $m$th local agent.  When the green light ends and there is no traffic, a pre-defined max reward $R_{max}$ is assigned to the $m$th local agent.  If there is still traffic, a penalty proportional to  $N_{m,t}$ is given to this local agent.  More precisely, for Case 1, the reward $r_{m,t}^{local}$ for the $m$th intersection is defined as: \\
 {\bf Case 1}: If the green light ends but there is still traffic,
 \begin{equation}
     r_{m,t}^{local} = 
    \begin{cases}
      R_{max}, if\;\frac{{\;\;{N_{m,t}}}}{{{N_{max}}}} \le \frac{1}{N_{max}} \\
      { - R_{max} \times {N_{m,t}}/{N_{max}}}, \text{else}
    \end{cases}
  \end{equation}
For Case 2, if there is no traffic but a long period still remaining for the green light, various vehicles moving on another road should stop and wait until this green light turns off.  To avoid this case happening again, a penalty should be given to this local agent.  Let $g_{m,t}$ denote the remnant green light time (counted by seconds) when there is no traffic flow in the $m$th intersection at time step $t$, and $G_{max}$ the largest duration of green light.  Then, the reward function for Case 2 is defined as follows.\\
{\bf Case 2}: If there is no traffic but the green light is still on,
\begin{equation}
     r_{m,t}^{local} =
    \begin{cases}
      R_{max}, if\;\frac{{\;\;{g_{m,t}}}}{{{G_{max}}}} \le \frac{1}{G_{max}} \\
      { - R_{max} \times {g_{m,t}}/{G_{max}}}, \text{else}
    \end{cases}
  \end{equation}
Detailed architectures for local agents are shown in Fig. \ref{fig:local_agent}.  Fig. \ref{fig:local_agent}(a) shows the proposed local critic architecture.  Its inputs include the numbers of stopped vehicles at the end of the green light at each lane, the the remaining green light seconds, and current traffic signal phases of all intersections. Thus, the input dimension for each local critic network is $(2M+\sum _{m = 1}^{ {M} }{N_{lane}^m})$, where  $M$ denotes the number of intersections and ${N_{lane}^m}$ is the number of lanes in the $m$th intersection.  Then, a hyperbolic tangent function is used as an activation function to normalize all the input and output values.  There are two hidden fully-connected layers used to model the $Q$-value.  The output is the expected value of future return of doing the action at the state.

The architecture of the local actor network is shown in
Fig. \ref{fig:local_agent}(b).    Three inputs are used to model this network including the numbers of stopped vehicles at the end of the green light at each lane, and current traffic signal phases of all intersections. Thus, the input dimension for each local actor network is $(M+\sum _{m = 1}^{ {M} }{N_{lane}^m})$.

Let $\theta^Q_m$ and $\theta^{\mu}_m$ denote the sets of parameters of the $m$th local critic and actor networks, respectively. To train $\theta^Q_m$ and $\theta^{\mu}_m$, we sample a random minibatch of $N_b$ transitions $({\bm S}_{i},{\bm A}_{i},{\bm R}_{i},{\bm S}_{i+1})$ from $\bf B$, where  
\begin{enumerate}
\item each state ${\bm S}_{i}$ is an $M\times 1$ vector and contains the local states of all intersections.
\item each action  ${\bm A}_{i}$ is represented as an $M\times 1$ vector which contains the seconds of current phase of all intersections. 
\item reward ${\bm R_{i}}$ is an $M\times 1$ vector which contains the rewards obtained from each intersection after performing ${\bm A}_i$ at the state ${\bm S}_i$.  In addition, ${\bm R_{i} (m)}$ denotes the reward of the $m$th intersection after performing ${\bm A}_i$.
\end{enumerate}
Let $y_i^m$ denote the reward obtained from the $m$th target critic nework.  The loss function for updating $\theta^Q_m$ is then defined as follows:
\begin{equation}
    L_{critic}^m={1 \over {N_b}}\sum_{i=1}^{N_b}(y_{i}^m-Q({\bm S}_{i},{\bm A}_{i}|\theta_m^{Q}))^{2}.
\end{equation}
In addition, the loss function for updating $\theta^{\mu}_{m}$ is defined as
\begin{equation}
  L_{actor}^m=-{1 \over {N_b}}\sum_{i=1}^{N_b}Q({\bm S}_{i},\mu ({\bm S}_{i}|\theta_m^{\mu})|\theta_m^{Q}).
\end{equation}
With $\theta^{Q}_{m}$ and $\theta^{\mu}_{m}$, the parameters $\theta^{Q'}_{m}$ and $\theta^{\mu'}_{m}$ for the target network are updated as follows:
\begin{equation}
 \theta_m^{Q'} \leftarrow (1-\tau) \theta_m^{Q}+\tau\theta_m^{Q'},
\end{equation}
and 
\begin{equation}
 \theta_m^{\mu'} \leftarrow (1-\tau) \theta_m^{\mu}+\tau\theta_m^{\mu'}.
\end{equation}
The parameter $\tau$ is set to 0.8 for updating the target network. Details to update the parameters of local agents are described in {\bf Algorithm 3}.  
 \subsection{Global Agent}
To make the output action no longer blindly explore the environment, we introduces a global agent to explore the environment more precisely when outputting actions. The global agent controls  the  total  waiting time at all intersections. Fig. \ref{fig:global_agent} shows the detailed architectures of the global critic and actor networks, where (a) is the one of global critic network and (b) is for the global actor network. For the $m$th intersection, we use $V_m$ to denote the number of its total vehicles, and $T_{m,n}^{w,i}$ to be the waiting time of vehicle $n$ in  at the time step $i$. Then, the total waiting time across the whole site is used to define the global reward as follows:\\
\begin{equation}
  r_{i}^G=-\frac{1}{M}\sum_{m=1}^{M}\sum_{n=1}^{V_m}T_{m,n}^{w,i}.
\label{eq:GlobalReward}
\end{equation}
 Let $\theta^Q_G$ and $\theta^{\mu}_G$ denote the parameters of the global critic and actor networks, respectively. To train $\theta^Q_G$ and $\theta^{\mu}_G$, we sample a random minibatch of $N_b$ transitions $({\bm S}_{i},{\bm A}_{i},{\bm R}_{i},{\bm S}_{i+1})$ from $\bf B$.  Let $y_i^G$ denote the reward obtained from the global target critic network at the time step $i$.  Then, the loss function for updating $\theta^Q_G$ is defined as follows:
\begin{equation}
   L_{critic}^{G}=\frac{1}{N_b} \sum_{i}({y}_{i}^G-{Q_G}({\bm S}_{i},{\bm A}_{i}|\theta^{ Q}_G))^{2}.
\end{equation}
  The network architecture to calculate the value function $Q_G$ is shown in Fig. \ref{fig:global_agent}(a).  It is noticed that the output of this global critic network is a scalar value, $i.e.$, the predicted total waiting time across the whole site. To train $\theta^{\bm \mu}_{G}$, we use the loss function:
\begin{equation}
  L_{actor}^G=- {1 \over {N_b} } \sum_{i}{Q_G}({\bm S}_{i},{\bm{\mu_G}} ({\bm S}_{i}|\theta^{\bm{\mu}}_G)|\theta^{Q}_G).
\end{equation}
Fig. \ref{fig:global_agent}(b) shows the architecture to calculate $\bm {\mu}_G$.  In addition, the output of the global actor network is a vector which includes the actions of all intersections, and the weight $W_{G}^m$ which represent the global agent's importance to the  $m$th intersection.   
 All the local agents and global agent are modeled as a DDPG.  Details to update the global agent are described in {\bf Algorithm 4}.

\begin{algorithm}[]
\SetAlgoLined
Initialize critic network $Q(s,a|\theta^{Q})$ and actor network $\mu(s|\theta^{\mu})$ with random weights $\theta^{Q}$ and $\theta^{\mu}$.\\Initialize target network $Q'$ and $\mu '$\ with weights $\theta^{Q'} \leftarrow \theta^{Q}, \theta^{\mu '} \leftarrow \theta^{\mu}$ and also initialize replay buffer $R$.
  
\For{t=1, ... ,T}{
    Clean the replay buffer $\bf B$.\\
    /* $\bf {B}=(B_1,...,B_m ,...,B_M);$ */ \\
    /* $B^m$: on-policy data for the $m$th intersection */\\
    /* Generate on-policy data */ \\ 
    $\bf B$$=GOD(t)$;  \\
    \For{episode=1, ..., 400}{
        \For{m=1,..., M, Global}{
            \If{$m \neq Global$}{$LAU$($\bf B$,$m$);// Update local agents\\}
            \If{agent=Global}{$GAU$($\bf B$);// Update the global agent\\}
            }
        }
    }
\caption{COMMA-DDPG traffic signal control RL algorithm.}
\label{algo:1}
\end{algorithm}

\begin{algorithm}[]
\SetAlgoLined 
/* Run one hour of simulation
with noise  $\eta$*/\\
Input: \, \ $t$: timestamp\\ 
\, \, \, \, \, \, \ $\theta^{\mu}_m$: parameters for the $m$th actor network \\
\, \, \, \, \, \, \ $\theta^{\mu}_G$: parameters for the global actor network \\
Output: $\bf B$: on-policy data  \\

$\beta = 0.95^t$;// rate for time decline \\
\For{m=1, ... , M}{
 Get $W_{G}^m$ from the global actor network with the parameters  $\theta^{\mu}_G$;\\
 $W_{G}^m$= $\beta \times W_{G}^m$; $W_{L}^m$=1-$W_{G}^m$;\\
\For{l=1, ... ,3600}{

 /* $\epsilon$: the probability of choosing to explore */\\
/* $\eta_{m}$: noise for epsilon greedy exploration*/\\ 
$p=$ random(0,1);
 $\eta_{m}=
  \left\{
\begin{aligned}
0, if\  p \leq \epsilon,\\
random(-5,5), if\  p > \epsilon ;
\end{aligned}
\right.$\

    $a_{l}^m=
    \left\{
    \begin{aligned}
    \mu(s_{l}|\theta^{\mu}_{m})+\eta_{m}, \;\; \text{if} \; W_{L}^m>W_{G}^m,\\
    {\bm {\mu}}_{G}(s_{l}|\theta^{\mu}_{G})(m)+\eta_{m}, \;\; \text{if} \; W_{L}^m<W_{G}^m;
    \end{aligned}
\right.$

   Execute $a_{l}^m$ and observe $r_{l}^m, s_{l+1}^m$\;
  Store transition $(s_{l}^m,a_{l}^m,r_{l}^m,s_{l+1}^m)$ in $\bf{B_m}$\;
}

  }
 $\bf {B}=(B_1,...,B_m ,...,B_M);$\\
 Return(B);
\caption{ GOD (Generating On-policy Data)}
\label{algo:2} 
\end{algorithm}

\begin{algorithm}[]
\SetAlgoLined
Input: \\ 
\ \ \ $\bf B$: on-policy data; $m$: the $m$th agent  \\
\ \ \ $\theta_m^Q$: set of parameters for the local critic network; \\
\ \ \ $\theta_m^\mu$: set of parameters for the local actor network;  \\
\ \ \ ($\theta_m^{Q'}$,$\theta_m^{\mu '}$): sets of parameters for the target network;  \\
Output: \\ 
\ \ \ $\theta_m^Q$: new parameters for the $m$th critic network; \\
\ \ \ $\theta_m^\mu$: new parameters for the $m$th  actor network;  \\
\ \ \ ($\theta_m^{Q'}$,$\theta_m^{\mu '}$): new parameters for the target network;  \\
Sample a random minibatch of $N_b$ transitions $({\bm S}_{i},{\bm A}_{i},{\bm R}_{i},{\bm S}_{i+1})$ from $\bf B$\;
  Set $y_{i}^m={\bm R}_{i}(m)+\gamma Q'({\bm S}_{i+1}|\mu '({\bm S}_{i+1}|\theta_m^{\mu '})|\theta_m^{Q'})$\;
  Update the critic parameters $\theta_m^Q$ by minimizing the loss: $L_{critic}^m=\frac{1}{N_b} \sum_{i}(y_{i}^m-Q({\bm S}_{i},{\bm A}_{i}|\theta_m^{Q}))^{2}$\;
  Update the actor parameters $\theta_m^\mu$ by minimizing the loss: $L_{actor}^m=-\frac{1}{N_b}\sum_{i}Q({\bm S}_{i},\mu ({\bm S}_{i}|\theta_m^{\mu})|\theta_m^{Q})$\;
  Update the target network:
  $\theta_m^{Q'} \leftarrow (1-\tau) \theta_m^{Q}+\tau\theta_m^{Q'},\theta_m^{\mu'} \leftarrow (1-\tau) \theta_m^{\mu}+\tau\theta_m^{\mu'}$;
\caption{LAU (Local Agent Updating)}
\label{algo:3} 
\end{algorithm}

\begin{algorithm}[]
\SetAlgoLined
Sample a random minibatch of $N_b$ transition $({\bm S}_{i},{\bm A}_{i},{\bm R}_{i},{\bm S}_{i+1})$ from $\bf B$\;
  Calculate $r_i^G$ from Eq.(\ref{eq:GlobalReward}).\\
  Set ${y}^G_{i}={r}_{i}^G+\gamma {Q_G'}({\bm S}_{i+1}|{\bm {\mu}}_G'({\bm S}_{i+1}|\theta^{\bm{\mu}'}_G)|\theta^{Q'}_G)$\;
  Update the critic parameter $\theta^{ Q}_G$ by minimizing the loss: $L_{critic}^{G}=\frac{1}{N_b} \sum_{i}({y}_{i}^G-{Q_G}({\bm S}_{i},{\bm A}_{i}|\theta^{ Q}_G))^{2}$\;
  Update the actor parameter $\theta^{\bm{\mu}}_G$ by minimizing the loss: $L_{actor}^G=-$  $1 \over {N_b}$ $\sum_{i}{Q_G}({\bm S}_{i},{\bm{\mu_G}} ({\bm S}_{i}|\theta^{\bm{\mu}}_G)|\theta^{ Q}_G) $\;
  Update the target networks:
  $\theta^{Q'}_G \leftarrow (1-\tau) \theta^{Q}_G+\tau\theta^{Q'}_G,\theta^{\bm{\mu'}}_G \leftarrow (1-\tau) \theta^{\bm{\mu}}_G+\tau \theta^{\bm {\mu'}}_G$
\caption{GAU(Global Agent Updating)}
\label{algo:4} 
\end{algorithm}

\section{Experimental Results}
\subsection{Environment Setup}
It is difficult to test and evaluate traffic signal control strategies in the
real world due to high cost and intensive labor.  Simulation is a
useful alternative method before actual implementation for most SoTA methods~\cite{wei2021recent}. To build the simulation data, real data were collected from five real intersections in a city in Asia among half a year. The simulation with real traffic flow at the intersections was performed based on the traffic simulation software  TSIS~\cite{owen2000traffic}. Through TSIS, we can control the behaviors of each traffic light with a plug-in program and use it as the simulation software needed for performance evaluation. 

\begin{table}[t]
\centering
\caption{ Comparisons of waiting time and speed between COMMA-DDPG and other traditional RL methods.
\vspace{-0.1cm}}
\begin{tabular}{lcccc}
Method     & Waiting Time & Average Speed     \\\hline
Fixed      & 750628       & 19                \\
IntelliLight~\cite{wei2018intellilight} & xxx       & xx                \\
MADDPG~\cite{gupta2017cooperative}    & 420561       & 20                \\
TD3~\cite{fujimoto2018addressing}      & 716481       & 20                \\
PPO~\cite{schulman2017proximal} & 873585       & 12                \\
CGRL~\cite{VanDerPol16LICMAS} & xxx       & xx                \\
Presslight~\cite{wei2019presslight} & xxx       & xx                \\ 
CoLight~\cite{wei2019colight}       & xxx       & xx                \\\hline
COMMA-DDPG w/o Global Agent        & xxx       & xxx                \\
COMMA-DDPG with Global Agent& 269747       & 43                \\\hline
\end{tabular}
\end{table}

\begin{figure}
    \centering
    \includegraphics[scale = 0.2]{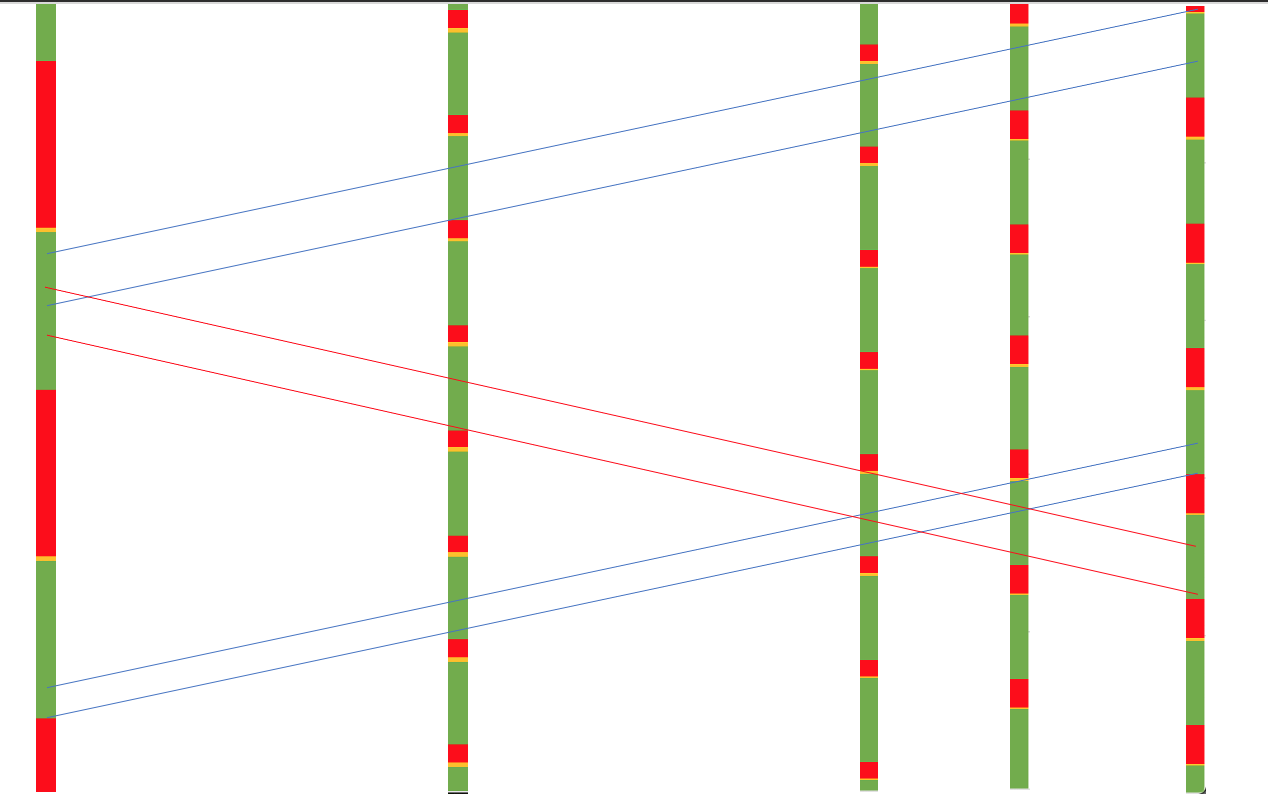}
    \caption{Time and space diagram.}
    \label{fig:time_and_space}
\vspace{-0.4cm}
\end{figure}

\begin{table}[t]
\centering
\caption{ Comparisons of throughput between COMMA-DDPG and other traditional RL methods.
\vspace{-0.1cm}}
\begin{tabular}{lccccc}
Method     & \begin{tabular}[c]{@{}c@{}}I1\end{tabular} & \begin{tabular}[c]{@{}c@{}}I2\end{tabular} & \begin{tabular}[c]{@{}c@{}}I3\end{tabular} & \begin{tabular}[c]{@{}c@{}}I4\end{tabular} & \begin{tabular}[c]{@{}c@{}}I5\end{tabular} \\\hline
Fixed  & 1530  & 1560  & 1996  & 2288 & 2291\\
MADDPG   & 1782  & 1819 & 2098 & 1896 & 2400\\
TD3  & 1370 & 1394 & 1787 & 2070 & 2147 \\
PPO & 979 & 957 & 1206 & 1517 & 1619\\\hline
COMMA-DDPG & 2225 & 2310 & 2784 & 3052 & 2868               
\end{tabular}
\vspace{-0.5cm}
\end{table}

\subsection{Results}
To train our method, the fixed-time control model was first used to pretrain our COMMA-DDPG model for speeding up the efficiency of training. The base line is a fixed strategy.  Three SoTA methods were compared in this paper; that is,  CGRL~\cite{VanDerPol16LICMAS}, MADDPG~\cite{gupta2017cooperative}, TD3~\cite{fujimoto2018addressing}, PPO~\cite{schulman2017proximal}, Presslight~\cite{wei2019presslight}, IntelliLight~\cite{wei2018intellilight}, CoLight~\cite{wei2019colight}.  Table 1 shows the comparisons of waiting time and average speed of vehicles among different methods. Clearly, our method performs better than the fixed-time model and other SoTA methods. Table 2 shows the comparisons of throughput between COMMA-DDPG and other methods at different intersections. Due to the control of the global agent, our method performs much better than other methods.  In algorithm 3 and 4, there is a soft update $\tau$ to update the network parameters. Table 3 shows the effects of change of $\tau$ to the training result in different situations. It means better performance can be gained if the model is not changed frequently.  DDPG is an off-policy method.  In Algorithm 2, an on-policy data collection method is proposed to train the agents.  The results in Table 4 illustrate the theory we mentioned earlier, that is, using on policy training can achieve better results.
\begin{table}[t]
\centering
\caption{ Comparisons of $\tau$ between random sample and fixed.
\vspace{-0.1cm}}
\begin{tabular}{lcc}
Updating Ratio $\tau$                                                           & Waiting Time & Average Speed \\\hline
\begin{tabular}[c]{@{}l@{}} $random(0,1)$\end{tabular}   & 436538       & 36            \\
\begin{tabular}[c]{@{}l@{}} $random(0.8,1)$\end{tabular} & 626936       & 27            \\
\begin{tabular}[c]{@{}l@{}} $random(0.9,1)$\end{tabular} & 675703       & 26         \\\hline
\begin{tabular}[c]{@{}l@{}} $\tau$=0.995\end{tabular}   & 269747       & 43            \\
\end{tabular}
\vspace{-0.3cm}
\end{table}

\begin{table}[]
\centering
\caption{ Comparisons between on-policy and off-policy training.
\vspace{-0.1cm}}
\begin{tabular}{lcc}
Method & Waiting Time & Average Speed \\\hline
\begin{tabular}[c]{@{}l@{}} on-policy\end{tabular}  & 269747       & 43            \\
\begin{tabular}[c]{@{}l@{}} off-policy \end{tabular} & 275868       & 29           
\end{tabular}
\end{table}

Fig~\ref{fig:time_and_space} shows that the distance between the two parallel lines is the Green Band, and its slope represents the driving speed. It shows that even during peak working hours, we can continue to pass through all intersections in the system without being hindered by red lights if we use the average speed as the designed continuous speed to drive.
\section{Conclusions}
This paper proposed a novel cooperative RL architecture to handle cooperation problems by adding a global agent. Since the global agent knows all the intersection information, it can guide the local agent to make better actions in the training process, so that the local agent does not use random noise to randomly explore the environment, but has a directional direction. explore. Since RL training requires a large amount of data, we hope to add it to RL through data augmentation in the future, so that training can be more efficient.


\section{Appendix for Convergence Proof}

In this section, we will prove that value function in our method will actually converge.
\begin{definition}
A metric space $<M,d>$ is complete (or Cauchy) if and only if all Cauchy sequences in $M$ will converge to $M$. In other words, in a complete metric space, for any point sequence $a_{1},a_{2}, \cdots \in M$, if the sequence is Cauchy, then the sequence converges to $M$:
\[ \lim_{n \rightarrow \infty}a_{n} \in M. \]
\end{definition}

\begin{definition}
Let (X,d) be a complete metric space. Then, a map T : X $\rightarrow$ X is called a contraction mapping on X if there exists q $\in [0, 1)$ such that $d(T(x),T(y))<qd(x,y)$, $\forall x,y \in X$.
\end{definition}
\begin{theorem}[Banach fixed-point theorem]
Let (X,d) be a non-empty complete metric space with a contraction mapping T : X $\rightarrow$ X. Then T admits a unique fixed-point $x^{*}$ in X. i.e. $T(x^{*})=x^{*}.$
\end{theorem}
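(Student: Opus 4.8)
The plan is to establish existence and uniqueness as two separate arguments, both driven by the contraction constant $q \in [0,1)$. The key device for existence is the \emph{Picard iteration}: since $X$ is non-empty I would fix an arbitrary starting point $x_0 \in X$ and define the sequence $x_{n+1} = T(x_n)$ for all $n \geq 0$. First I would show by induction, using the contraction property $d(T(x),T(y)) \leq q\, d(x,y)$, that consecutive terms shrink geometrically, namely $d(x_{n+1}, x_n) \leq q^{\,n}\, d(x_1, x_0)$.

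The central step is to prove that $\{x_n\}$ is a Cauchy sequence. For $m > n$ I would apply the triangle inequality repeatedly and then bound the resulting telescoping sum by a geometric series:
\[ d(x_m, x_n) \leq \sum_{k=n}^{m-1} d(x_{k+1}, x_k) \leq \sum_{k=n}^{m-1} q^{\,k}\, d(x_1, x_0) \leq \frac{q^{\,n}}{1-q}\, d(x_1, x_0). \]
Because $q < 1$, the factor $q^{\,n}$ tends to $0$, so the right-hand side can be made arbitrarily small uniformly in $m$, which is exactly the Cauchy condition. Invoking completeness of $(X,d)$, the sequence then converges to some limit $x^* \in X$.

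To confirm that $x^*$ is a fixed point, I would use that any contraction is (Lipschitz) continuous, so that $T(x^*) = T(\lim_{n} x_n) = \lim_{n} T(x_n) = \lim_{n} x_{n+1} = x^*$. For uniqueness, suppose $x^*$ and $y^*$ are both fixed points; then $d(x^*, y^*) = d(T(x^*), T(y^*)) \leq q\, d(x^*, y^*)$, and since $q < 1$ this forces $d(x^*, y^*) = 0$, i.e.\ $x^* = y^*$.

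The step I expect to be the main obstacle is the Cauchy estimate: one must handle the tail of the geometric series carefully so that the bound $\frac{q^{\,n}}{1-q}\, d(x_1,x_0)$ genuinely dominates $d(x_m,x_n)$ for \emph{all} $m > n$ simultaneously, rather than for a single fixed $m$. Once this uniform bound is secured, completeness supplies the limit, and both the verification that the limit is fixed and the uniqueness conclusion follow quickly from continuity and the strict inequality $q < 1$, respectively.
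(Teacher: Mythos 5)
Your proof is correct, and it is the canonical Picard-iteration argument: geometric decay of consecutive distances, a telescoping/geometric-series bound giving the Cauchy property, completeness to extract the limit, Lipschitz continuity of $T$ to identify the limit as a fixed point, and the strict bound $q<1$ for uniqueness. Every step is sound, including the point you flag as delicate --- your bound $d(x_m,x_n) \le \frac{q^n}{1-q}\,d(x_1,x_0)$ is indeed uniform in $m$, which is exactly what the Cauchy condition requires.

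For comparison purposes there is nothing to compare against: the paper states this theorem (its Theorem 1) without any proof, treating it as a known classical result that is then invoked, together with the Gershgorin circle theorem, to show that the Bellman iteration $T^{\pi}(u)=R^{\pi}+\lambda P^{\pi}u$ is a contraction on the space of value functions and hence has a unique fixed point $V^{\pi}$. So your contribution supplies exactly the ingredient the paper takes for granted. One cosmetic mismatch: the paper's Definition 2 of a contraction uses the strict inequality $d(T(x),T(y))<q\,d(x,y)$, while you work with $\le$; since strict inequality implies the non-strict one, your argument applies verbatim under the paper's definition, and in fact the non-strict version is the standard (and slightly more general) hypothesis under which the theorem holds.
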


\begin{theorem}[Gershgorin circle theorem]
Let A be a complex $n\times n$ matrix, with entries $a_{ij}$. For $i \in {1,2,...,n}$, let $R_{i}$ be the sum of the absolute of values of the non-diagonal entries in the $i^{th}$ row:
$$R_{i}=\sum_{j=0,j\neq i}^{n}|a_{ij}|.$$
Let $D(a_{ii},R_{i})\subseteq \mathbb{C} $ be a closed disc centered at $a_{ii}$ with radius $R_{i}$, and every eigenvalue of ${\displaystyle A}$ lies within at least one of the Gershgorin discs ${\displaystyle D(a_{ii},R_{i}).}$
\end{theorem}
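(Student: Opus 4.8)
The plan is to prove the theorem directly from the eigenvalue equation by exploiting the coordinate of the eigenvector having the largest modulus. I would begin by fixing an arbitrary eigenvalue $\lambda$ of $A$ together with a nonzero eigenvector $x=(x_1,\dots,x_n)^{\top}$, so that $Ax=\lambda x$. Because $x\neq 0$, I can select an index $i$ at which $|x_i|$ is maximal, i.e. $|x_i|\geq |x_j|$ for every $j$, and this maximal value automatically satisfies $|x_i|>0$.

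Next I would write out the $i$th row of the identity $Ax=\lambda x$, namely $\sum_{j=1}^{n} a_{ij}x_j=\lambda x_i$, and isolate the diagonal contribution to obtain $(\lambda-a_{ii})x_i=\sum_{j\neq i} a_{ij}x_j$. Taking absolute values and applying the triangle inequality gives $|\lambda-a_{ii}|\,|x_i|\leq \sum_{j\neq i}|a_{ij}|\,|x_j|$. The crucial step, and essentially the only place where any idea is required, is to invoke the maximality of $|x_i|$ to bound each $|x_j|$ by $|x_i|$, which yields $|\lambda-a_{ii}|\,|x_i|\leq |x_i|\sum_{j\neq i}|a_{ij}|=|x_i|\,R_i$. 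Dividing through by $|x_i|>0$ then produces $|\lambda-a_{ii}|\leq R_i$, which is precisely the statement that $\lambda\in D(a_{ii},R_i)$.

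Since $\lambda$ was an arbitrary eigenvalue, this shows that every eigenvalue lies in at least one Gershgorin disc, namely the one indexed by the dominant component of its eigenvector, completing the argument. I do not expect a substantive obstacle here: the entire weight of the proof rests on the single observation that choosing the coordinate of maximal modulus lets the triangle-inequality bound collapse exactly onto the row sum $R_i$. The only point demanding care is to confirm $|x_i|>0$ before dividing, and this is guaranteed because an eigenvector is nonzero by definition, so its largest-modulus entry cannot vanish.
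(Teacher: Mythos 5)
Your proof is correct, and it is the standard argument for the Gershgorin circle theorem: fix an eigenpair, select the coordinate of maximal modulus, apply the triangle inequality to the corresponding row of $Ax=\lambda x$, and divide by $|x_i|>0$. Every step is sound, including the care taken to justify the division.

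Note, however, that the paper itself offers no proof of this statement at all: it is quoted as a classical, off-the-shelf result (alongside the Banach fixed-point theorem) and is only \emph{used} in the appendix, where the authors apply it to the transition matrix $P^{\pi}$ to bound its spectral radius and thereby establish that the Bellman iteration is a contraction. So there is no paper proof to compare against; your argument fills in exactly the background fact the paper takes for granted, and it does so correctly. One small point worth flagging if you ever adapt this to the paper's application: the paper's matrix $P^{\pi}$ has zero diagonal entries and row sums at most $1$, so your bound $|\lambda - a_{ii}|\le R_i$ specializes to $|\lambda|\le 1$ there, which is precisely what the paper's Lemma needs.
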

\begin{lemma}
We claim that the value function of RL can actually converge, and we also apply it to traffic control.
\end{lemma}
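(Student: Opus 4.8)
The plan is to recognize the value-function iteration described in Section~\ref{sec:background} as repeated application of the Bellman optimality operator and then to invoke the Banach fixed-point theorem. Concretely, I would let $\mathcal{B}(S)$ be the space of bounded functions $V:S\to\mathbb{R}$ equipped with the sup-norm metric $d(U,V)=\sup_{s\in S}|U(s)-V(s)|$; this is complete, which is exactly why the definition of a complete metric space was stated as a preliminary. I would then define the operator $T$ by $(TV)(s)=\max_{a}\big(R(s,a)+\gamma\,\mathbb{E}_{s'}[V(s')]\big)$, so that a fixed point of $T$ is precisely a solution of the Bellman equation~(\ref{eq:BellmanQ}). The goal is to show $T$ is a contraction on $\langle \mathcal{B}(S),d\rangle$; the Banach fixed-point theorem then yields a unique $V^{*}$ with $TV^{*}=V^{*}$ and guarantees that the value-iteration sequence $V_{k+1}=TV_{k}$ converges to it from any starting function, which is the convergence the lemma asserts.

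The central step is the contraction estimate. For any $U,V$ and any state $s$, I would bound $|(TU)(s)-(TV)(s)|$ using the elementary inequality $|\max_a f(a)-\max_a g(a)|\le \max_a |f(a)-g(a)|$; applying it with $f(a)=R(s,a)+\gamma\,\mathbb{E}_{s'}[U(s')]$ and the analogous $g$, the reward terms cancel and the transition term is an average, giving $|(TU)(s)-(TV)(s)|\le \gamma\max_a \mathbb{E}_{s'}[|U(s')-V(s')|]\le \gamma\,d(U,V)$. Taking the supremum over $s$ yields $d(TU,TV)\le \gamma\,d(U,V)$, and since $\gamma\in[0,1)$ by the MDP definition, $T$ satisfies the contraction-mapping definition with modulus $q=\gamma$. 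This is the only place the discount factor is essential, and it is what makes the entire scheme converge.

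To use the Gershgorin circle theorem and to cover the fixed-policy evaluation step, I would additionally treat the linear operator $T_\pi V=R_\pi+\gamma P_\pi V$, where $P_\pi$ is the row-stochastic transition matrix induced by a fixed policy. Convergence of $V_{k+1}=T_\pi V_k$ is governed by the spectral radius of $\gamma P_\pi$. Applying the Gershgorin circle theorem to $\gamma P_\pi$, each eigenvalue lies in a disc centered at $\gamma (P_\pi)_{ii}$ of radius $\gamma\sum_{j\neq i}(P_\pi)_{ij}=\gamma\big(1-(P_\pi)_{ii}\big)$; hence every eigenvalue $\lambda$ obeys $|\lambda|\le \gamma (P_\pi)_{ii}+\gamma\big(1-(P_\pi)_{ii}\big)=\gamma<1$. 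The spectral radius is therefore strictly below one, so $(I-\gamma P_\pi)$ is invertible and the linear iteration converges to the unique $V_\pi=(I-\gamma P_\pi)^{-1}R_\pi$, complementing the contraction argument with an eigenvalue-based guarantee.

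The main obstacle is conceptual rather than computational: the clean contraction-and-eigenvalue picture above is exactly tabular value iteration, whereas COMMA-DDPG uses deep function approximation, continuous action spaces, and multiple cooperating agents. I expect the hardest part to be arguing that these theorems still deliver convergence for the actual architecture -- in particular, that the global/local decomposition and the age-decaying weights $W_G^m$ do not destroy the contraction, and that the critic networks track the Bellman target closely enough for the fixed-point argument to remain meaningful. I would address this by stating the guarantee at the level of the underlying Bellman operator (where it is rigorous) and then arguing that, for frozen target-network parameters and a sufficiently expressive critic, each per-agent update approximates one application of $T$, so the composite iteration inherits the $\gamma$-contraction up to a controllable approximation error.
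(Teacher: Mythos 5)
Your proposal is correct, and it reaches the conclusion by a genuinely different (and in places more careful) route than the paper. The paper works entirely with the \emph{fixed-policy evaluation} operator: it assumes a finite state space, writes $V^{\pi}=R^{\pi}+\gamma P^{\pi}V^{\pi}$ in matrix form, equips the space of value vectors with the infinity norm, and argues that $T^{\pi}(u)=R^{\pi}+\gamma P^{\pi}u$ is a contraction by invoking the Gershgorin circle theorem to bound the eigenvalues of $P^{\pi}$, then applies the Banach fixed-point theorem. You instead take the \emph{Bellman optimality} operator $(TV)(s)=\max_{a}\bigl(R(s,a)+\gamma\,\mathbb{E}_{s'}[V(s')]\bigr)$ as primary and prove the contraction with the elementary inequality $|\max_a f(a)-\max_a g(a)|\le\max_a|f(a)-g(a)|$, which requires no spectral theory at all and covers the optimality equation (\ref{eq:BellmanQ}) that the paper's linear argument never actually addresses; you then handle the policy-evaluation case separately by applying Gershgorin to $\gamma P_{\pi}$ to get spectral radius at most $\gamma<1$, hence invertibility of $I-\gamma P_{\pi}$ and convergence of the linear iteration. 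Your division of labor is also logically sounder at the one point where the paper is shaky: the paper passes from an eigenvalue bound on $P^{\pi}$ to the infinity-norm estimate $\|\gamma P^{\pi}(u-v)\|_{\infty}\le\gamma\|u-v\|_{\infty}$, but the spectral radius of a non-normal matrix does not bound its operator norm, so that step needs the direct row-stochasticity argument $\|P^{\pi}\|_{\infty}\le 1$ rather than Gershgorin; by contrast, your use of the spectral radius is confined to the linear iteration, where spectral radius strictly below one is exactly the right criterion. Finally, your closing paragraph, acknowledging that the contraction guarantee lives at the level of the tabular Bellman operator and only approximately transfers to the deep, multi-agent COMMA-DDPG architecture, is an honest caveat that the paper omits entirely, since its lemma silently identifies the two settings.
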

\begin{proof}
The value function is to calculate the value of each state, which is defined as follows:
\begin{equation}
\begin{array}{l}
{V^\pi }(s) = \sum\limits_a \pi  (a|s)\sum\limits_{s',r} p (s',r|s,a)[r + \gamma {V^\pi }(s')]\\
 = \sum\limits_a \pi  (a|s)\sum\limits_{s',r} p (s',r|s,a)r\\
{\rm{ }} + \sum\limits_a \pi  (a|s)\sum\limits_{s',r} p (s',r|s,a)[\gamma {V^\pi }(s')].
\end{array}
\end{equation}
Since the immediate reward is determined, it can be regarded as a constant term relative to the second term. Assuming that the state is finite, we express the state value function in matrix form below.
Set the state set $S=\{S_{0},S_{1},\cdots,S_{n}\}$, $V^{\pi}=\{ V^{\pi}(s_{0}), V^{\pi}(s_{1}), \cdots , V^{\pi}(s_{n}) \}^{T}$, and the transition matrix is
\begin{equation}
    {P^\pi } = \left( {\begin{array}{*{20}{c}}
0&{P^\pi _{0,1}}& \cdots &{P^\pi _{0,n}}\\
{P^\pi _{1,0}}&0& \cdots &{P^\pi _{1,n}}\\
 \cdots & \cdots & \cdots & \cdots \\
{P^\pi _{n,0}}&{P^\pi _{n,1}}& \cdots &0
\end{array}} \right),
\end{equation}
where $P^\pi _{i,j} = \sum\limits_a {\pi (a|{s_i})p({s_j},r|{s_i},a)}$.  The constant term is expressed as $R^{\pi}=\{ R_{0}, R_{1}, \cdots, R_{n}\}^{T}$. Then we can rewrite the state-value function as:
\begin{equation}
V^{\pi}=R^{\pi}+\lambda P^{\pi}V^{\pi}.
\end{equation}
Above we define the state value function vector as $V^{\pi}=\{ V^{\pi}(s_{0}), V^{\pi}(s_{1}),\cdots, V^{\pi}(s_{n})\}^{T}$, which belongs to the value function space $V$. We consider $V$ to be an n-dimensional vector full space, and define the metric of this space is the infinite norm. It means:
\begin{equation}
    d(u,v)=\parallel u-v \parallel_{\infty}=\max_{s \in S}|u(s)-v(s)|,\forall u,v \in V
\end{equation}
Since $<V,d>$ is the full space of vectors, $V$ is a complete metric space. Then, the iteration result of the state value function is $u_{new}=T^{\pi}(u)=R^{\pi}+\lambda P^{\pi}u$.
We can show that it is a contraction mapping.
\begin{equation}
    \begin{aligned}
    d(T^{\pi}(u),T^{\pi}(v))&=\parallel (R^{\pi}+\lambda P^{\pi}u)-(R^{\pi}+\lambda P^{\pi}v)\parallel_{\infty}\\
    &=\parallel \lambda P^{\pi}(u-v) \parallel_{\infty}\\
    &\le \parallel \lambda P^{\pi}\parallel u-v \parallel_{\infty}\parallel _{\infty}.
    \end{aligned}
\end{equation}
From Theorem 2, we can show that every eigenvalue of $P^{\pi}$ is in the disc centered at $(0,0)$ with radius 1. That is, the maximum absolute value of eigenvalue will be less than 1.
\begin{equation}
    \begin{aligned}
    d(T^{\pi}(u),T^{\pi}(v))&\le \parallel\lambda P^{\pi}\parallel u-v \parallel_{\infty}\parallel_{\infty}\\
    &\le \lambda \parallel u-v \parallel_{\infty}\\
    &=\lambda d(u,v).
    \end{aligned}
\end{equation}
From the Theorem 1, Eq.(2) converges to only $V^{\pi}$.
\end{proof}

\bibliographystyle{IEEEtrans}
\bibliography{ITS_TrafficSignalControl}

%








\end{document}